\documentclass{article}

\usepackage{mathptmx}
\usepackage{url}
\usepackage{amsthm}

\theoremstyle{definition}
\newtheorem{definition}{Definition}
\theoremstyle{plain}

\theoremstyle{plain}
\newtheorem{corollary}{Corollary}
\theoremstyle{plain}
\newtheorem{conjecture}{Conjecture}
\theoremstyle{remark}
\newtheorem{remark}{Remark}
\newtheorem{fact}{Fact}
\theoremstyle{plain}
\newtheorem{proposition}{Proposition}
\theoremstyle{plain}

\theoremstyle{definition}
\newtheorem{example}{Example}

\usepackage{caption}
\usepackage{todonotes}
\usepackage{lipsum,framed}
\usepackage{amsmath}
\usepackage{amssymb}
\usepackage{pifont}% http://ctan.org/pkg/pifont
\usepackage{comment}
\usepackage{tikz}
\usetikzlibrary{positioning,arrows,calc,shapes,babel,fit} 
\usetikzlibrary{decorations.pathmorphing}
\tikzset{
modal/.style={>=stealth',shorten >=1pt,shorten <=1pt,auto,node distance=1.5cm,
semithick},
world/.style={circle,draw,minimum size=0.5cm},
arg/.style={circle,draw,minimum size=0.5cm},
uncarg/.style={circle,draw,dashed,minimum size=0.5cm},
sarg/.style={draw},
carg/.style={draw,minimum size=0.5cm},
point/.style={circle,draw,inner sep=0.5mm,fill=black},
reflexive above/.style={->,loop,looseness=7,in=120,out=60},
reflexive below/.style={->,loop,looseness=7,in=240,out=300},
reflexive left/.style={->,loop,looseness=7,in=150,out=210},
reflexive right/.style={->,loop,looseness=5,in=30,out=330},
coil/.style={decorate, decoration={coil,amplitude=4pt,segment length=5pt}},
snake/.style={decorate, decoration={snake}},
zigzag/.style={decorate, decoration={zigzag}}
}
\newcommand{\ang}[1]{\langle #1 \rangle}

\newcommand{\uni}{\mathsf{U}}
\newcommand{\args}{\mathsf{A}}
\newcommand{\rel}{\mathsf{D}}
\newcommand{\lexpressive}{\preccurlyeq}

\newcommand{\compfun}{\mathsf{com}}

\newcommand{\ndefiaf}{\mathsf{IA}}

\newcommand{\comp}{\ang{\args,\defrel^{\ast}}}
\newcommand{\naf}{\mathsf{AF}}

\newcommand{\impor}{\mathsf{IMPLY}^{\lor}}
\newcommand{\opimp}{\mathsf{IMPLY}}
\newcommand{\opor}{\mathsf{OR}}
\newcommand{\opnand}{\mathsf{NAND}}

\newcommand{\ndepdefiaf}{\mathsf{dIA}}
\newcommand{\fix}{F}

\newcommand{\black}{\color{black}}

\newcommand{\lanset}{\mathsf{L}}
\newcommand{\rulset}{\mathsf{R}}
\newcommand{\namefun}{\mathfrak{n}}

\newcommand{\defrel}{\mathsf{D}}
\newcommand{\negfun}{\overline{\cdot}}

\newcommand{\at}{\mathsf{AT}}

\newcommand{\unravelat}{\ang{\lanset, \negfun, \rulset, \namefun,\kb}}

\newcommand{\kb}{\mathsf{K}}

\newcommand{\sub}{\mathsf{Sub}}
\newcommand{\prem}{\mathsf{Prem}}
\newcommand{\conc}{\mathsf{Conc}}
\newcommand{\ftoprule}{\mathsf{TopRule}}
\newcommand{\sto}{\!\twoheadrightarrow\!}

\newcommand{\prefprefix}{\mathsf{p}\text{-}}

\newcommand{\nsaf}{\mathsf{S}}
\newcommand{\daf}{\mathsf{AF}}

\newcommand{\nisaf}{\mathsf{IS}}

\newcommand{\unravelsaf}{\ang{\lanset, \negfun, \rulset, \namefun, \kb, \leq}}
\newcommand{\indexedsaf}[1]{\ang{\lanset_{#1}, \negfun_{#1}, \rulset_{#1}, \namefun_{#1}, \kb_{#1}, \leq_{#1}}}

\newcommand{\contfun}{\overline{\cdot}}

\newcommand{\pia}{\mathsf{PIA}}
\newcommand{\pda}{\mathsf{PDA}}

\begin{document}

\pagestyle{headings}
\def\thepage{}
%\begin{frontmatter}              % The preamble begins here.

%\pretitle{Pretitle}
\title{Some Results about the Expressivity of Preference-Incomplete Structured Argumentation Frameworks}
\author{Antonio Yuste-Ginel\\
University of M\'alaga \\
ayusteginel@uma.es}
\maketitle
%\markboth{}{April 2026\hb}
%\subtitle{Subtitle}

%

%\affiliation{University of M\'alaga}

\begin{abstract}
This paper studies the expressive power of ASPIC$^+$ argumentation frameworks with uncertain preference profiles by comparing them with several abstract formalisms with uncertain defeats. Most of our results are negative (and some of them are theoretically unexpected). We also conjecture a positive, non-trivial threshold for the expressivity of uncertain preferences, and prove some essential preliminary steps toward the confirmation of this conjecture.
\end{abstract}

% \begin{keyword}
% ASPIC \sep incomplete argumentation frameworks \sep preferences
% \end{keyword}
% \end{frontmatter}
%\markboth{April 2026\hb}{April 2026\hb}
%\thispagestyle{empty}
%\pagestyle{empty}

\sloppy
%==================
\section{Introduction}
%==================

\paragraph{Context} Arguing and uncertainty appear intertwined in many real contexts. On one hand, assessing conflicting arguments is a canonical way to deal with uncertain and incompatible sources of information. On the other hand, uncertain argumentative information (e.g, which rules or premises are available) is key to knowing which arguments are relevant and, eventually, which propositions shall get accepted. This second sense of influence is especially relevant in multi-agent scenarios (including adversarial ones), where the lack of information of one agent about another's argumentative background determines which arguments are considered as potentially good moves in dialogue. Consequently, the formal argumentation community has devoted considerable effort to modelling uncertainty in abstract argumentation frameworks, both in a probabilistic fashion \cite{hunter2021survey} and in a qualitative (i.e., non-numerical) one \cite{mailly2022yes}. Regarding the latter, the standard abstract tool is Incomplete Argumentation Frameworks (IAFs), these are Dung's frameworks \cite{dung1995acceptability}, where the sets of arguments and defeats are split into two disjoint components representing, respectively, certain and uncertain arguments and defeats. Reasoning about argument acceptability in IAFs is done via the notion of completion (hypothetical removals of uncertainty that can be seen as epistemically possible worlds).

\paragraph{Motivation and previous work} However, critical voices have emerged about the conceptual plausibility of remaining at an abstract formal level of argumentation, ignoring the internal structure of argument and the nature of conflict among them. In \cite{prakken2018abstraction}, the authors show how some popular abstract formalisms make strong assumptions that are unjustifiable once the structure of arguments is taken into account. This motivated a new line of research on the viability of instantiating abstract formalisms within the standard tools (e.g., ASPIC$^+$ \cite{modgil2013aspic} or ABA \cite{bondarenko1997aba}, among others). Such a trend has also permeated the analysis of arguing with qualitative uncertainty. In this line, structured argumentation with uncertain rules \cite{ai32023} and uncertain premises \cite{odekerken2025argumentative} have been studied (and recently compared \cite{proietti2025comparative}). Furthermore, structured frameworks with uncertain preferences were first mentioned in \cite{baumeister2021acceptance} and later defined in \cite{ria2024}, but a more comprehensive analysis remains missing. 

\paragraph{Focus and methodology} Straightforwardly, uncertain preferences generate uncertain defeats at the abstract level.\footnote{In structured argumentation frameworks, preferences are used to disregard some potential defeats (\cite{modgil2013aspic}).} An important research question is, then, what kind of abstract formalism do we need to capture the sort of uncertainty modelled by uncertain preferences? It has already been shown \cite[Proposition 2]{ria2024} that standard defeat-IAFs --IAFs where the only uncertain components are defeats (see, e.g., \cite{cayrol2007partial,att-IAFs-2015-acceptance})-- are not expressive enough: uncertain preferences might generate sets of completions at the abstract level that are not isomorphic to the set of completions of any defeat-IAF. This paper continues such an inquiry by comparing uncertain preferences with other abstract formalisms for defeat-uncertainty (especially, those of \cite{fazzingakr21}). Our reference framework for structured argumentation is ASPIC$^+$.

%\paragraph{Structure and contribution} The paper proceed as follows. 

%===================================================
\section{Background}\label{sec:background}
%===================================================
\subsection{Abstract argumentation frameworks with uncertain defeats}

In what follows, we assume as given a \textbf{background set of arguments} (names) $\uni=\{a_1,...,a_n,...\}$. \par

\begin{definition}[\cite{dung1995acceptability}]
An \textbf{abstract argumentation framework} (AF) is a directed graph $\naf=\ang{\args,\defrel}$ where $\args\subseteq\uni$ is a finite set of \emph{arguments} ($\args \subseteq \uni$) and $\defrel\subseteq \args\times \args$ is a \textit{defeat relation} among them. \\
\textbf{Notation}: We use `AFs' both as the plural of the abbreviation `AF' and as the name of the set of all AFs. The same convention is applied for the rest of the formalisms in this paper.
\end{definition}
% \begin{notation}
%     We use
% \end{notation}
The informal notion of acceptable argument is formally captured in AFs through different argumentation semantics (see, e.g, \cite{baroni2018abstract}). However, since they are orthogonal to our purposes, we skip their definition.\par 

Both arguments and defeats of an AF can be treated as uncertain. When we only focus on defeat-uncertainty, the standard tool to model this is defeat-IAFs.

\begin{definition}[\cite{cayrol2007partial,att-IAFs-2015-acceptance}]
A \textbf{defeat-incomplete abstract argumentation framework} (def-IAF) is a tuple $\ndefiaf=\ang{\args,\defrel^{\fix},\defrel^?}$ where $\args\subseteq \uni$ is a finite set of arguments and $\defrel^\fix,\defrel^?\subseteq \args\times \args$ are two disjoint sets of defeats, i.e.\ fixed and uncertain ones. A \textbf{completion} of $\ndefiaf=\ang{\args,\defrel^{\fix},\defrel^?}$ is any AF $\ang{\args,\defrel^{\ast}}$ s.th.
$\defrel^{F}\subseteq \defrel^\ast \subseteq (\defrel^{F}\cup \defrel^{?})$. We use $\compfun(\ndefiaf)$ to denote the set of all completions of $\ndefiaf$, and use the same notation for the formalisms below.
\end{definition}

In the previous notion of completion, all combinations of uncertain defeats are possible. This is why (def-)IAFs have been described as a \textit{combinatorial} way of modelling uncertainty \cite{jlc}. However, this kind of uncertainty can be shown to be very limited when one wants to model real argumentative scenarios. The following extension of def-IAFs overcomes this limitation.

\begin{definition}[\cite{fazzingakr21}]
Given a defeat-incomplete abstract argumentation framework $\ndefiaf=\ang{\args,\defrel^{\fix},\defrel^{?}}$, a \textbf{dependency} over $\ndefiaf$ is an expression of the form $\impor(X,Y)$, $\opor(X)$ or $\opnand (X)$, where $X$ and $Y$ are (non-empty) subsets of $\defrel^?$.\par 
A \textbf{defeat-incomplete abstract argumentation framework with dependencies} (dep-IAF) is a tuple $\ndepdefiaf=\ang{\args,\defrel^{\fix},\defrel^{?},\Delta}$ where $\ang{\args,\defrel^{\fix},\defrel^{?}}$ is an argument-incomplete abstract argumentation framework and $\Delta$ is a set of \emph{dependencies} over $\ang{\args,\defrel^{\fix},\defrel^{?}}$.
A \textbf{completion} of $\ang{\args,\defrel^{\fix},\defrel^{?},\Delta}$ is any AF $\ang{\args,\defrel^{\ast}}$ s.th.:
\begin{itemize}
\item $\ang{\args,\defrel^{\ast}}$ is a completion of $\ang{\args,\defrel^{\fix},\defrel^{?}}$; and
\item $\comp$ satisfies $\Delta$, that is, for all $\delta \in \Delta$:

\begin{itemize}
    \item If $\delta=\impor(X,Y)$, then $X\subseteq \defrel^{\ast}$ implies $Y \cap \defrel^{\ast}\neq \varnothing$.
    \item If $\delta=\opor(X)$, then $X\cap \defrel^{\ast}\neq \varnothing$.
     \item If $\delta=\opnand(X)$, then $X \cap \defrel^\ast \subset X$.
\end{itemize}
\end{itemize}

A few subclasses of dep-IAFs will be of interest to us. We say that a dep-IAF $\ndepdefiaf=\ang{\args,\defrel^{\fix},\defrel^{?},\Delta}$ is:

\begin{itemize}
    \item a \textbf{disjunctive def-IAF} (dis-IAF) iff $\Delta$ only contains constraints of the form $\opor(\{\ang{x,y},\ang{y,x}\})$ (recall that $\ang{x,y},\ang{y,x} \in \defrel^?$); 
    \item  an \textbf{implicative def-IAF} (imp-IAF) iff $\Delta$ only contains constraints of the form $\impor(X,Y)$ where $X$ and $Y$ are singleton. %    Again, the notation for $\impor(X,\{y\})$ is simplified as $\opimp(X,y)$; and

        \item a \textbf{disjunctive-implicative def-IAF} (dis-imp-IAF) iff $\Delta$ only contains constraints of any of the two forms described in the previous points.
\end{itemize}

\end{definition}

Note that disjunctive dependencies are enough to capture the static part of control AFs \cite{dimopoulos2018control}, another popular framework for arguing with qualitative uncertainty.

%%%%======================
\subsection{ASPIC$^+$}
%%%%======================

We now provide the basics of ASPIC$^{+}$, our baseline structured formalism. For discussion and motivation, the reader is referred to \cite{modgil2013aspic,modgil2014tutorial}.
\begin{definition}[\cite{modgil2013aspic}]\label{def:aspic-theory}
    An \textbf{argumentation theory} is a tuple $\at=\ang{\lanset, \negfun, \rulset, \namefun,\kb}$ where:
\begin{itemize}
\item $\lanset$ is a formal language.
\item $\negfun:\lanset \to \wp(\lanset)$ is a contrariness function. We say that:
\begin{itemize}
\item $\varphi$ is a contrary of $\psi$ iff $\varphi \in \overline{\psi}$ but $\psi \notin \overline{\varphi}$.
\item $\varphi$ is a contradictory of $\psi$ iff $\varphi \in \overline{\psi}$ and $\psi \in \overline{\varphi}$.
\end{itemize}
It is assumed that each $\varphi \in \lanset$ has at least one contradictory, denoted $-\varphi$.

\item $\rulset=\rulset_s \cup \rulset_d$ with $\rulset_s \cap \rulset_d =\varnothing$ is a set of inference rules (pairs of finite sets of formulas and formulas). %\footnote{ Given a set $S$, we denote by $\wp_{fin}(S)$ the set of all its finite subsets.} 
$\rulset_s$ represents strict rules while $\rulset_d$ represents defeasible rules.

\item $\namefun:\rulset_d \to \lanset$ is a (possibly) partial naming function for defeasible rules.
\item $\kb \subseteq \lanset$ is a knowledge base which comes split into two disjoint subsets $\kb_n$ (axioms) and $\kb_p$ (ordinary premises).
\end{itemize}
\end{definition}

The central notion of argument is then defined as follows:
\begin{definition}[\cite{modgil2013aspic}]
The set of \textbf{arguments of a given argumentation theory} $\at=\unravelat$, denoted $\args(\unravelat)$, is defined inductively. Together with the notion of argument, we define some auxiliary functions: $\sub(\cdot)$ (returns the \textbf{subarguments} of an argument), $\prem(\cdot)$ (returns the \textbf{premises} of an argument), $\conc(\cdot)$ (returns the \textbf{conclusion} of an argument), and $\ftoprule(\cdot)$ (returns the \textbf{last rule} employed in the construction of an argument). We establish that $A \in \args(\unravelat)$ iff $A$ is any finite expression built by the application of the following rules:

\begin{itemize}
\item $A=[\varphi]$ if $\varphi \in \kb$, with 
\begin{itemize}
    \item 
$\prem(A)=\conc(A)=\{\varphi\}$, 
\item 
$\sub(A)=\{[\varphi]\}$, and 
\item $\ftoprule(A)$ is left undefined.

\end{itemize}
\item $A=[A_1,...,A_n \sto \varphi]$ (with $n\geq 0$) if $A_1,...,A_n$ are arguments and \\ $\ang{\{\conc(A_1),...,\conc(A_n)\},\varphi} \in \rulset_s$, with 

\begin{itemize}
\item $\prem(A)=\prem(A_1)\cup...\cup\prem(A_n)$, 

\item $\conc(A)=\varphi$, 

\item $\sub(A)=\{A\}\cup \sub(A_1)\cup...\cup\,\sub(A_n)$, 

\item $\ftoprule(A)=\ang{\{\conc(A_1),...,\conc(A_n)\},\varphi}$.

\end{itemize}
\item $A=[A_1,...,A_n \Rightarrow \varphi]$ (with $n\geq 0$) if $A_1,...,A_n$ are arguments and \\ $\ang{\{\conc(A_1),...,\conc(A_n)\},\varphi} \in \rulset_d$, with:

\begin{itemize}
\item $\prem(A)=\prem(A_1)\cup...\cup\prem(A_n)$, 

\item $\conc(A)=\varphi$, 

\item $\sub(A)=\{A\}\cup \sub(A_1)\cup...\cup\,\sub(A_n)$, 

\item $\ftoprule(A)=\ang{\{\conc(A_1),...,\conc(A_n)\},\varphi}$.

\end{itemize}
\end{itemize}

We restrict our attention to theories that generate a \textbf{finite} set of arguments. When writting down arguments, squared brackets are omitted whenever no confusion arises.
% \noindent We omit squared brackets whenever no ambiguity arises. Note that arguments do not depend on $\contfun$ nor on $\namefun$, so we sometimes abbreviate $\args(\unravelat)$ as $\args(\lanset,\rulset,\kb)$.  Given $A \in \args (\at)$ we define the \textbf{rules of $A$} as $\rulset(A)=\{\ftoprule(B)\mid B \in \sub(A)\}$.  \par \medskip
\end{definition}

 \begin{remark}[Rules and arguments without premises]\label{remark:premise-less-rules} The previous definitions allow for (i) rules with an empty set of formulas on the left-hand side (i.e., it is possible that $\ang{\{\},\varphi} \in \rulset$); and consequently (ii) arguments with an empty set of premises ($\Rightarrow \varphi$ and $\sto\varphi$, but also more complex ones like $\sto \varphi, \Rightarrow \psi \Rightarrow \delta$). 
 %If an argument has no premise, we say it is \textbf{premiseless}. We say that an argument is \textbf{simple} iff it is either a single formula (a premise) or an argument of form $\Rrightarrow\varphi$ with $\Rrightarrow\in \{\sto,\Rightarrow\}$. 
 \end{remark}
     
     %  \begin{remark}[Excluding infinite arguments] The original definition of ASPIC$^+$ arguments allows for infinite arguments \cite{modgil2013aspic}: arguments which are rooted in a conclusion but keep growing infinitely without ``finding'' a set of premises. These abnormal arguments are later excluded from the definition of associated AF in the ASPIC literature. We exclude them from the very definition of argument for presentational purposes.         
     % \end{remark}
     \black
Next, one needs to codify a notion of conflict among arguments.
\begin{definition}[\cite{modgil2013aspic}]
Given an argumentation theory $\at=\unravelat$, and two arguments $A,B\in \args(\at)$, we say that \textbf{$A$ attacks $B$} iff $A$ undermines, rebuts or undercuts $B$, where: 

\begin{itemize}
\item $A$ \textbf{undermines} $B$ (on $B'$) iff $\conc(A)\in \overline{\varphi}$ for some $B'=\varphi \in \prem(B)$ such that $\varphi \in \kb_p$. {We say that $A$ \textbf{contrary-undermines} $B$ if $\conc(A)$ is a contrary of $\varphi$.}
\item $A$ \textbf{rebuts} $B$ (on $B'$) iff $\conc(A)\in \overline{\varphi}$ for some $B' \in \sub(B)$ of the form $B_1',....,B_n'\Rightarrow\varphi$. {We say that $A$ \textbf{contrary-rebuts }$B$ if $\conc(A)$ is a contrary of $\varphi$.}

\item $A$ \textbf{undercuts} $B$ (on $B'$) iff $\conc(A)\in \overline{\namefun(\ftoprule(B'))}$ for some $B' \in \sub(B)$ with $\ftoprule(B')\in \rulset_{d}$.
\end{itemize}

We define two shorthands that will be useful later on:

\begin{itemize}
    \item We say that there is a \textbf{preference-independent attack} of $A$ to $B$ (on $B'$) iff $A$ either undercuts, contrary-rebuts or contrary-undermines $B$ (on $B'$). We shorten this as $\pia(A,B,B')$.
    \item We say that there is a \textbf{preference-dependent attack} from $A$ to $B$ (on $B'$) in the rest of the cases of the previous definition. We abbreviate this as $\pda(A,B,B')$. 
\end{itemize}

\end{definition}

%A simple conflict is sometimes not enough to decide whether an argument wins over another. To this end, ASPIC$^{+}$ adds a preference relation among arguments, which is integrated in the formalism as follows:

%Preference-dependent attacks are symmetric. 
Some of these attacks can be ignored if we have a preference relation among arguments. Intuitively, a preference relation encodes some sort of epistemic precedence of certain arguments over others. Formally, it results in the following definition.

\begin{definition}[\cite{modgil2013aspic}]
A \textbf{structured argumentation framework} (SAF) is a tuple $\nsaf=\unravelsaf$ where $\at=\unravelat$ is an argumentation theory and $\leq$ is a preferential ordering relation among arguments $\leq \subseteq \args(\at)\times \args(\at)$ (its strict counter-part of $\leq$, denoted $<$, is defined as usual: $<=\leq\setminus \leq^{-1}$).
\end{definition}
Preference relations are usually assumed to be a partial preorder, but we take the fully general notion (as in \cite{modgil2013aspic}).
Furthermore, in many contexts and applications, preferences are usually defined among formulas and/or rules and then lifted to arguments. Different lifting principles that respect rationality postulates (i.e., desirable properties about the framework output) have been studied (see \cite{modgil2013aspic}).
We stay in a semi-abstract perspective here, in order not to commit to any particular kind of preference.

\begin{definition}[\cite{modgil2013aspic}]
Given $\nsaf=\unravelsaf$, and $A,B\in \args(\unravelat)$, we say that $A$ \textbf{defeats} $B$ iff:
\begin{itemize}
 \item[(i)]  $A$ undercuts/contrary-rebuts/contrary-undermines $B$;\footnote{Equivalently, $\pia(A,B,B')$ for some $B'$.} \black or 
 \item[(ii)] $A$ undermines/rebuts $B$ (on $B'$) and $A\not < B'$.\footnote{Equivalently, $\pda(A,B,B')$ and $A\not < B'$ for some $B'$.} 
\end{itemize}

The set of all defeats for a given $\nsaf$ is denoted $\defrel(\nsaf)$. We instead equate the set of arguments $\args(\nsaf)$ of a given $\nsaf$ with those of its underlying argumentation theory, i.e. $\args(\nsaf) = \args(\at)$. In general, given $\nsaf=\unravelsaf$, we use $\lanset(\nsaf)$ to denote $\lanset$ and apply the same convention for the rest of the components (including the non-primitive components $\args(\nsaf)$ and $\rel(\nsaf)$). 
\end{definition}

The following definition provides the link between abstract and structured models of argumentation.%key notion that lifts structured argumentation frameworks to the abstract level by identifying their associated graph.
\begin{definition}[\cite{modgil2013aspic}]
   Let $\nsaf=\unravelsaf$ be given, the \textbf{abstract argumentation framework associated to $\nsaf$} is defined as $\daf(\nsaf)=\ang{\args(\nsaf),\defrel(\nsaf)}$. 
\end{definition}

\begin{example}\label{example:saf} Consider $\nsaf_0=\indexedsaf{}$, where:
\begin{itemize}
    \item $\lanset$ is the language of propositional logic; 
    \item $\overline{\cdot}$ is given by classical negation (i.e., $\varphi \in \overline{\psi}$ iff $\varphi=\lnot \psi$ or $\psi=\lnot \varphi$);
    \item $\rulset_s=\{\ang{\{u\},\lnot s}\}$;
    \item $\rulset_d=\{\ang{\{p\},q}, \ang{\{w\},r}, \ang{\{s\},\lnot r} \}$;
    \item  $\namefun$ is only defined for $\namefun(\ang{\{p\},q})=r$;
    \item $\kb_n=\{p,u\}$;
    \item $\kb_p=\{s, w\}$;
    \item $\leq =\{\ang{ s \Rightarrow \lnot r, w \Rightarrow r }\} $.
\end{itemize}

The associated AF looks as follows, where each box is an argument and arrows represent the defeat relation:

\begin{center}
\begin{tikzpicture}[->,>=stealth,shorten >=1pt,auto,node distance=1.4cm,
                thick,main node/.style={circle,draw,font=\bfseries},uncertain/.style={rectangle,draw,dashed,font=\bfseries}]

\node[sarg] (s) {$s$};
\node[sarg] (snr) [right=0.5cm of s] {$s\Rightarrow \lnot r$};

\node[sarg] (pq) [right=2cm of snr] {$p \Rightarrow q$};
\node[sarg] (p) [above=0.1cm  of pq] {$p$};

\node[sarg] (uns) [above=0.6cm of s] {$u\sto \lnot s$};
\node[sarg] (u) [above=0.1cm of uns] {$u$};

\node[sarg] (wr) [right=2cm of uns] {$w\Rightarrow r$};
\node[sarg] (w) [above=0.1cm  of wr] {$w$};

\path[->] (snr) edge (pq);
\path[->] (uns) edge (snr);
\path[->] (uns) edge (s);
\path[->] (wr) edge (snr);
\end{tikzpicture}
\end{center}
    
\end{example}

%=========================================================
\section{ASPIC$^+$ with uncertain preferences}
%=========================================================

This section contains the main contributions of the paper. 

\newcommand{\prefcomp}{\ang{\lanset, \negfun, \rulset, \namefun, \kb, \leq^{\ast}}}

%%%%%%==================
\subsection{Definitions and examples}
%%%%%%==================

Let us start by defining our object of study. As mentioned, preference-incomplete structured argumentation frameworks were first discussed by \cite{baumeister2021acceptance} as a plausible instantiation of def-IAFs. They were first defined by \cite{ria2024}.

\begin{definition}[\cite{ria2024}]\label{def:pref-ISAFs}    
A \textbf{preference-incomplete structured argumentation framework} (pref-ISAF) is a tuple $\mathsf{IS}=\unravelsaf$,
where every component is just as in a SAF except for $\leq$, which comes split into two parts $\leq=\leq^{F}\cup \leq^{?}$ with $\leq^{F}\cap\leq^{?}=\varnothing$.
\end{definition}
%\todo[inline]{provide intuition (e.g., multi-agency)}

Intuitively, $\leq^\fix$ represents the known or certain preference relation, while $\leq^?$ is the uncertain part. This uncertainty can be understood as stemming from multi‑agency. In this picture, a pref-ISAF is a model of how an agent $i$ sees the argumentative situation of another agent $j$. However, agent $i$ is not sure about how $j$ evaluates the relevant arguments preference-wise, and hence the preference $\leq$ is uncertain in the model.

\begin{definition}\label{def:pref-ISAFs-completions}  
A \textbf{preference-completion of $\nisaf=\unravelsaf$} is any $\nsaf^{\ast}=\prefcomp$ where $\leq^{F}\subseteq \leq^{\ast}\subseteq (\leq^{F}\cup\leq^{?})$. We denote as $\prefprefix\compfun(\nisaf)$ the set of all preference-completions of $\mathsf{IS}$. 
Given $\nsaf^\ast=\prefcomp\in \prefprefix\compfun(\nisaf)$, we abbreviate $\defrel(\prefcomp)$ as $\defrel(\leq^\ast)$ when the context is clear.
\par 
 Let $\mathsf{IS}$ be given, its set of (abstract) \textbf{completions} is defined as:

\centerline{$\compfun(\mathsf{IS})=\{\daf(\nsaf^{\ast})\mid \nsaf^{\ast} \in \prefprefix\compfun(\nisaf)\}\text{.}$}

\end{definition}

Let us now see a couple of examples that will be useful later on.

\begin{example}\label{example:pref-ISAF} Consider $\mathsf{IS}_0=\unravelsaf$, where:
\begin{itemize}
    \item $\lanset$ is the language of propositional logic; 
    \item $\overline{\cdot}$ is given by classical negation (i.e., $\varphi \in \overline{\psi}$ iff $\varphi=\lnot \psi$ or $\psi=\lnot \varphi$);
    \item $\rulset_d=\{\ang{\{p\},r}, \ang{\{q\},\lnot r}\}$;
    \item $\kb_p=\{p, q\}$;
    \item $\leq^?= \{\ang{p \Rightarrow r, q \Rightarrow \lnot r}, \ang{q \Rightarrow \lnot r,p \Rightarrow r}\} $.
    \item The rest of the components are empty.
\end{itemize}

Note that $\mathsf{IS}_0$ has four preference-completions, namely, the frameworks obtained by using the preference sets $\varnothing$,  $\{\ang{p \Rightarrow r, q \Rightarrow \lnot r}\} $, $\{\ang{q \Rightarrow \lnot r,p \Rightarrow r}\}$ and $\leq^?$. However, $\varnothing$ and $\leq^?$ collapse in the same associated framework, so we obtain three completions: \par 
\medskip
\scalebox{0.95}{
\begin{tabular}{c|c |c}

     \begin{tikzpicture}[->,>=stealth,shorten >=1pt,auto,node distance=1.4cm,
                thick,main node/.style={circle,draw,font=\bfseries},uncertain/.style={rectangle,draw,dashed,font=\bfseries}]

\node[sarg] (pr) {$p \Rightarrow r$};
\node[sarg] (p) [below=0.2cm of pr] {$p$};
\node[sarg] (qnr) [right=1cm of pr] {$q\Rightarrow \lnot r$};
\node[sarg] (q) [below=0.2cm of qnr] {$q$};

\path[<->] (pr) edge (qnr);

\end{tikzpicture}
\quad
     & 
\quad
     % SECOND COMPLETION
          \begin{tikzpicture}[->,>=stealth,shorten >=1pt,auto,node distance=1.4cm,
                thick,main node/.style={circle,draw,font=\bfseries},uncertain/.style={rectangle,draw,dashed,font=\bfseries}]

\node[sarg] (pr) {$p \Rightarrow r$};
\node[sarg] (p) [below=0.2cm of pr] {$p$};
\node[sarg] (qnr) [right=1cm of pr] {$q\Rightarrow \lnot r$};
\node[sarg] (q) [below=0.2cm of qnr] {$q$};

\path[->] (pr) edge (qnr);

\end{tikzpicture}
     \quad
     & 
         \quad
      % THIRD COMPLETION
   
       \begin{tikzpicture}[->,>=stealth,shorten >=1pt,auto,node distance=1.4cm,
                thick,main node/.style={circle,draw,font=\bfseries},uncertain/.style={rectangle,draw,dashed,font=\bfseries}]

\node[sarg] (pr) {$p \Rightarrow r$};
\node[sarg] (p) [below=0.2cm of pr] {$p$};
\node[sarg] (qnr) [right=1cm of pr] {$q\Rightarrow \lnot r$};
\node[sarg] (q) [below=0.2cm of qnr] {$q$};

\path[<-] (pr) edge (qnr);

\end{tikzpicture}
     \\
\end{tabular}
    }
\end{example}

\begin{example}\label{ex:isaf2} 
 %\todo[inline]{revise (contrary rebuts are pref-independent). put as a separate example}
Let $\nisaf_1= \unravelsaf$, where $\lanset$ and $\contfun$ are as in the previous example, and where
    $\kb_n=\rulset_s=\varnothing$; $\kb_p=\{p,\lnot p\}$, $\rulset_d=\{\ang{\{p\},q}\}$, $\leq^{F}=\{\ang{\lnot p,p}\}$, $\leq^{?}=\{\ang{p,\lnot p}\}$. We then obtain two preference-completions and two associated AFs, namely 

\begin{center}
\begin{tikzpicture}[->,>=stealth,shorten >=1pt,auto,node distance=1.4cm,
                thick,main node/.style={circle,draw,font=\bfseries},uncertain/.style={rectangle,draw,dashed,font=\bfseries}]
\node[sarg] (np) {$\lnot p$};
\node[sarg] (p) [right of=np] {$p$};
\node[sarg] (pq) [above of=p] {$p\Rightarrow q$};

\path[->] (np) edge[bend left] (p)
        (p) edge (np)
		(np) edge (pq);

\node[sarg] (np) [right=3cm of p] {$\lnot p$};
\node[sarg] (p) [right of=np] {$p$};
\node[sarg] (pq) [above of=p] {$p\Rightarrow q$};
\path[->] (p) edge (np);
\end{tikzpicture}
\end{center}

\end{example}

As mentioned, preference-uncertainty causes defeat-uncertainty at the abstract level. Let us make this claim more formal.
\begin{proposition}\label{prop:pref-ISAFs-def-incompleteness}
   Let $\mathsf{IS}$ be a pref-ISAF, and let $\ang{\args_1,\defrel_1},\ang{\args_2,\defrel_2}\in \compfun(\mathsf{IS})$. Then, $\args_1=\args_2$.
\end{proposition}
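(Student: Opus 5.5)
The plan is to unfold the definitions until the claim reduces to the observation that the set of arguments of an ASPIC$^+$ framework depends only on its underlying argumentation theory. Since the preference relation never enters the construction of arguments, every completion ends up with the same argument set.

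Let $\mathsf{IS}=\unravelsaf$ with $\leq=\leq^{F}\cup\leq^{?}$, and take two completions $\ang{\args_1,\defrel_1},\ang{\args_2,\defrel_2}\in\compfun(\mathsf{IS})$. By Definition~\ref{def:pref-ISAFs-completions}, there are preference-completions $\nsaf_1=\ang{\lanset,\negfun,\rulset,\namefun,\kb,\leq_1^{\ast}}$ and $\nsaf_2=\ang{\lanset,\negfun,\rulset,\namefun,\kb,\leq_2^{\ast}}$ in $\prefprefix\compfun(\mathsf{IS})$ such that $\ang{\args_i,\defrel_i}=\daf(\nsaf_i)$ for $i=1,2$. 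The first step is to stress that the two preference-completions differ from $\mathsf{IS}$ only in the preference component; all other components are literally the same. Hence both have the same underlying argumentation theory $\at=\unravelat$.

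The second step uses the definition of the associated AF, which gives $\args_i=\args(\nsaf_i)$. The paper stipulates that $\args(\nsaf)=\args(\at)$, the arguments of the underlying theory. The inductive definition of $\args(\unravelat)$ refers only to $\kb$, $\rulset_s$ and $\rulset_d$; it uses neither $\leq$ nor, for that matter, $\negfun$ or $\namefun$. So $\args_1=\args(\at)=\args_2$.

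There is no real obstacle here. The only point needing care is to make explicit that preferences are used exclusively in the definition of defeat, clause (ii). They therefore affect $\defrel_i$ but never $\args_i$, which is exactly why preference-uncertainty shows up only as defeat-uncertainty at the abstract level.
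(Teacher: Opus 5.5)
Your proposal is correct and follows the same approach as the paper's proof. Both completions come from preference-completions that share everything except the preference relation, and the construction of arguments uses only the language, rules and knowledge base, so the argument sets coincide; you simply spell out in more detail the step that the paper states in one line.
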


\begin{proof}
  $\ang{\args_1,\defrel_1}$ and $\ang{\args_2,\defrel_2}$ are the AFs associated with two preference-completions $\nsaf_1$ and $\nsaf_2$ that share their formal languages, sets of rules and knowledge bases; therefore, they must have the same arguments.
\end{proof}

%=============================================
\subsection{Comparing sets of completions}
%=============================================

Recall our research question: What kind of defeat-uncertainty do we get from preference-uncertainty? To provide a clear, formal answer, we need a way to compare sets of completions. The following definition does so.
\begin{definition}
    Given two sets of AFs $S=\{\ang{\args,\defrel_1},...,\ang{\args,\defrel_n}\}$ and $S'=\{\ang{\args',\defrel_1'},...,\ang{\args',\defrel'_n}\}$ with the same cardinality, such that each AF in $S$ (resp.\ $S'$) has the same set of arguments (i.e., given two sets of defeat-completions). We say that they are \textbf{equivalent} (in symbols, $S\approxeq S'$) iff $\ang{\args,\defrel_1,....,\defrel_n}$ is isomorphic to $\ang{\args',\defrel'_1,....,\defrel'_n}$.\par 

Given two classes of argumentative formalisms with qualitative uncertainty (i.e., def-IAFs, dep-IAFs, pref-ISAFs, etc) $\mathsf{X}$ and $\mathsf{Y}$, we say that \textbf{$\mathsf{X}$ is at least as expressive as $\mathsf{Y}$} (in symbols, $\mathsf{Y} \lexpressive \mathsf{X}$) iff for all $Y \in \mathsf{Y}$ there is a $X \in \mathsf{X}$ such that $\compfun(X) \approxeq \compfun(Y)$. %If the function $i$ is clear enough from context, we omit it and identify each $x$ with $i(x)$.
\end{definition}

\begin{fact} The relation $\lexpressive$ is reflexive and transitive. \end{fact}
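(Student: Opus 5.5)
The plan is to first show that the equivalence relation $\approxeq$ is itself reflexive and transitive on sets of defeat-completions, and then read off the two properties of $\lexpressive$ directly from its definition. Throughout, the isomorphism in the definition of $\approxeq$ is understood as a bijection $f:\args\to\args'$ together with a correspondence of the indexed relations, i.e.\ $\ang{x,y}\in\defrel_i$ iff $\ang{f(x),f(y)}\in\defrel'_{\sigma(i)}$ for some bijection $\sigma$ on the indices. Allowing $\sigma$ is natural because $S$ and $S'$ are sets, and nothing below depends on whether $\sigma$ is permitted or fixed to the identity.

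For \emph{reflexivity}, given any class $\mathsf{X}$ and any $X\in\mathsf{X}$, we pick $X$ itself as the witness. It then suffices to check $\compfun(X)\approxeq\compfun(X)$. This holds via the identity map on $\args$ and the identity permutation on the listed relations, since every structure is isomorphic to itself.

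For \emph{transitivity}, we assume $\mathsf{Z}\lexpressive\mathsf{Y}$ and $\mathsf{Y}\lexpressive\mathsf{X}$ and take an arbitrary $Z\in\mathsf{Z}$. The first assumption gives $Y\in\mathsf{Y}$ with $\compfun(Y)\approxeq\compfun(Z)$, and the second gives $X\in\mathsf{X}$ with $\compfun(X)\approxeq\compfun(Y)$. Composing the two isomorphisms, and composing their index permutations as well, yields an isomorphism witnessing $\compfun(X)\approxeq\compfun(Z)$. The cardinality and shared-argument-set side conditions carry over, because equality of cardinalities is transitive and each set already has a common argument set. We also note that $\approxeq$ is symmetric, since isomorphisms can be inverted, so the order of arguments in $\approxeq$ does not matter.

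No step here is a real obstacle. The only point needing care is that the definition of $\approxeq$ lists the relations in a particular order while $S$ and $S'$ are unordered sets, and that is exactly why the index permutation is carried along explicitly in the transitivity step.
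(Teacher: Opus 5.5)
Your proof is correct. It is the argument the paper leaves implicit, since the Fact is stated there without proof: reflexivity comes from the identity isomorphism, and transitivity from composing the witnesses $\compfun(X)\approxeq\compfun(Y)$ and $\compfun(Y)\approxeq\compfun(Z)$, with your quantifier directions matching the definition of $\lexpressive$. Reading $\approxeq$ up to a re-indexing of the completions, as you do, is also the sensible way to make it well defined on unordered sets.
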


\begin{remark} The previous notion of expressivity significantly simplifies the one used in \cite{proietti2025comparative} for comparing sets of completions. The reason for this simplification is that we focus here on defeat-uncertainty (vs.\ the argument-uncertainty of \cite{proietti2025comparative}), and hence we do not need to keep track of the identity of arguments across different completions to get an intuitive notion of equivalence.
\end{remark}

The following proposition, proved by \cite[p. 304]{fazzingaijcai21}, shows that dep-IAFs provide an upper bound for expressing uncertainty about defeats. In other words, dep-IAF are maximally expressive wrt defeat incompleteness.

\begin{proposition}[\cite{fazzingakr21}]\label{prop:dep-iaf} Let $T=\{\ang{\args,\defrel_1},...,\ang{\args,\defrel_n}\}$ be any finite set of AFs with the same domain (i.e., any finite set of defeat-completions). There is a dep-IAF $\ndepdefiaf=\ang{\args,\defrel^{\fix},\defrel^?,\Delta}$ such that $\compfun(\ndepdefiaf)=T$.
%\todo[inline]{sketch proof if we have space.}

%\begin{}
% \textbf{generalise and sketch argument}

% Let $\ndefiaf = \ang{\args,\defrel^{\fix},\defrel^?}$ be an def-IAF and $X \subseteq \compfun(\ndefiaf)$ a subset of its possible completions. There is always a set $\Delta$ of dependencies such that $\compfun(\ang{\args,\defrel^{\fix},\defrel^?,\Delta}) = X$.
\end{proposition}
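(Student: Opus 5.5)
The plan is a direct construction. I fix the fixed defeats as the edges present in every member of $T$ and the uncertain defeats as those present in some but not all members. I then use dependencies to exclude every candidate completion of the underlying def-IAF that is not in $T$.

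Concretely, I set $\defrel^{\fix}=\bigcap_{i}\defrel_i$ and $\defrel^?=\bigcup_i\defrel_i\setminus\defrel^{\fix}$. These are disjoint subsets of $\args\times\args$. Every $\ang{\args,\defrel_i}\in T$ satisfies $\defrel^{\fix}\subseteq\defrel_i\subseteq\defrel^{\fix}\cup\defrel^?$, so $T\subseteq\compfun(\ang{\args,\defrel^{\fix},\defrel^?})$. The latter set is finite, since it is in bijection with $\wp(\defrel^?)$.

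For each $Z\subseteq\defrel^?$ such that $\ang{\args,\defrel^{\fix}\cup Z}\notin T$, I add one dependency $\delta_Z$ that is violated exactly by the completion whose uncertain part is $Z$. The natural choice is $\impor(Z,\defrel^?\setminus Z)$: a completion $\defrel^\ast$ violates it iff $Z\subseteq\defrel^\ast$ and $\defrel^\ast\cap(\defrel^?\setminus Z)=\varnothing$, i.e.\ iff $\defrel^\ast\cap\defrel^?=Z$. Thus each $\delta_Z$ removes exactly one unwanted completion and none of those in $T$. Taking $\Delta=\{\delta_Z\}$ gives $\compfun(\ndepdefiaf)=T$ by double inclusion.

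The main obstacle is the requirement that $X,Y$ be non-empty subsets of $\defrel^?$. This fails in the edge cases $Z=\varnothing$ or $Z=\defrel^?$, and I handle them with the other connectives. If $Z=\varnothing$ and $\defrel^?\neq\varnothing$, I use $\opor(\defrel^?)$, which is violated exactly when no uncertain defeat is present. If $Z=\defrel^?\neq\varnothing$, I use $\opnand(\defrel^?)$, which is violated exactly when all uncertain defeats are present. If $\defrel^?=\varnothing$, then $T$ contains a single AF, the only completion is $\ang{\args,\defrel^{\fix}}$, and $\Delta=\varnothing$ suffices.

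It remains to check that, when $\varnothing\neq Z\subsetneq\defrel^?$, both arguments of $\impor$ are non-empty, which holds by construction. With that, every unwanted completion is excluded by its own dependency and every member of $T$ satisfies all dependencies, which completes the argument. Note the result gives equality of completion sets, which is stronger than the $\approxeq$ needed later.
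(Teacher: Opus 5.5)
Your proposal is correct. The paper does not prove this proposition; it only cites Fazzinga et al., so there is no in-paper argument to compare against, and yours is a sound, self-contained replacement.

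Your construction amounts to writing the admissible completions as a CNF over the uncertain defeats. Each excluded assignment $Z\subseteq\defrel^?$ is ruled out by its own maximal clause $\impor(Z,\defrel^?\setminus Z)$. The clauses with no negative literals become $\opor(\defrel^?)$, and those with no positive literals become $\opnand(\defrel^?)$.

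The verifications all go through:
\begin{itemize}
\item Since $\defrel^{\fix}\cap\defrel^?=\varnothing$, a completion is determined by $\defrel^\ast\cap\defrel^?$.
\item Each dependency you add, including the two boundary ones, is violated by exactly one completion of the underlying def-IAF, and that completion lies outside $T$.
\item The non-emptiness side conditions hold, with $\defrel^?=\varnothing$ correctly treated as the case where $T$ is a singleton.
\end{itemize}

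Two small remarks. First, taking $\bigcap_i\defrel_i$ presupposes $n\geq 1$; the statement's notation implicitly assumes this, since $T$ must be non-empty to fix $\args$. Second, the construction may use up to $2^{|\defrel^?|}-|T|$ dependencies. That is harmless for an expressivity result, but it is not a succinct encoding.
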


Hence, as a corollary, dep-IAFs are expressive enough to subsume pref-ISAFs.

\begin{corollary}
    pref-ISAFs $\lexpressive$ dep-IAFs.
\end{corollary}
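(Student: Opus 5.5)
The plan is to derive the corollary directly from Proposition~\ref{prop:dep-iaf}, after checking that the set of completions of any pref-ISAF meets its hypotheses. Unfolding the definition of $\lexpressive$, we must show: for every pref-ISAF $\nisaf$ there is a dep-IAF $\ndepdefiaf$ with $\compfun(\ndepdefiaf)\approxeq\compfun(\nisaf)$. We do this by showing that $\compfun(\nisaf)$ is a finite set of AFs over one common domain, and then taking the dep-IAF that Proposition~\ref{prop:dep-iaf} provides.

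Fix $\nisaf=\unravelsaf$ with $\leq=\leq^{\fix}\cup\leq^?$.

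\emph{Step 1 (common domain).} By Proposition~\ref{prop:pref-ISAFs-def-incompleteness}, all members of $\compfun(\nisaf)$ have the same argument set, namely $\args(\at)$.

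\emph{Step 2 (finiteness).} By our standing restriction, $\args(\at)$ is finite. Every preference-completion uses some $\leq^\ast$ with $\leq^{\fix}\subseteq\leq^\ast\subseteq\leq^{\fix}\cup\leq^?\subseteq\args(\at)\times\args(\at)$. There are therefore at most $2^{|\leq^?|}$ preference-completions, so $\compfun(\nisaf)=\{\ang{\args(\at),\defrel(\leq^\ast)}\}$ is finite. It is also nonempty, since $\leq^{\fix}$ itself gives a completion. Duplicate associated AFs are harmless, because $\compfun$ is a set.

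\emph{Step 3 (apply Proposition~\ref{prop:dep-iaf}).} Take $T=\compfun(\nisaf)$. Proposition~\ref{prop:dep-iaf} yields a dep-IAF $\ndepdefiaf=\ang{\args(\at),\defrel^{\fix},\defrel^?,\Delta}$ with $\compfun(\ndepdefiaf)=T$. Literal equality gives $\approxeq$: the two sets have equal cardinality and the identity map on $\args(\at)$ is an isomorphism of the indexed structures $\ang{\args,\defrel_1,\dots,\defrel_n}$.

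The one point needing care is not mathematical depth but compatibility with Proposition~\ref{prop:dep-iaf}. The arguments of $\nisaf$ are structured expressions, whereas dep-IAFs take arguments from $\uni$. If this matters, we first rename $\args(\at)$ bijectively into $\uni$, which is possible because it is finite. We then apply Proposition~\ref{prop:dep-iaf} to the renamed set and conclude equivalence up to that isomorphism, which is exactly what $\approxeq$ allows.
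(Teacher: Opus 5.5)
Your proposal is correct and follows the paper's route: the paper obtains the corollary directly from Proposition~\ref{prop:dep-iaf}, and your Steps 1--3 make explicit the hypotheses it uses without stating them (a common domain via Proposition~\ref{prop:pref-ISAFs-def-incompleteness}, and finiteness from the standing restriction to finitely many arguments). Your renaming of the structured arguments into $\uni$ is harmless bookkeeping that the paper glosses over.
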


%================================================
\subsection{Some negative results}
%================================================

The next natural question is if we can get rid of dependencies and simulate pref-ISAFs at the abstract level with simple def-IAFs (as originally suggested by \cite{baumeister2021acceptance}). The answer was shown to be negative by \cite[Proposition 2]{ria2024}.

\begin{proposition}[\cite{ria2024}]\label{prop:negative-def-IAFs}
$\text{pref-ISAFs } \not \lexpressive \text{ def-IAFs.}$
\end{proposition}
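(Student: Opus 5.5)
The plan is to exhibit a single pref-ISAF whose set of completions has no equivalent counterpart among def-IAFs. Since $\mathsf{Y}\lexpressive\mathsf{X}$ means that every member of $\mathsf{Y}$ is matched by some member of $\mathsf{X}$, one counterexample suffices. The natural witness is $\mathsf{IS}_0$ from Example~\ref{example:pref-ISAF}. As computed there, $\compfun(\mathsf{IS}_0)$ consists of exactly three AFs over the same four arguments $\{p, q, p\Rightarrow r, q\Rightarrow\lnot r\}$. They are: the one with the mutual defeat between $p\Rightarrow r$ and $q\Rightarrow\lnot r$, the one with only $p\Rightarrow r$ defeating $q\Rightarrow\lnot r$, and the one with only the reverse defeat. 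I would briefly re-verify this.
\begin{itemize}
\item The only attacks are the two preference-dependent (contradictory) rebuts between $p\Rightarrow r$ and $q\Rightarrow\lnot r$; the premises $p,q$ are never attacked.
\item The preferences $\varnothing$ and $\leq^?$ give no strict preference, so both defeats survive.
\item Each singleton preference set makes one argument strictly weaker, which removes exactly one defeat.
\end{itemize}
Hence the three defeat relations are pairwise distinct.

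The key step is a cardinality invariant for def-IAFs. For any def-IAF $\ang{\args,\defrel^{\fix},\defrel^?}$, the map $X\mapsto \defrel^{\fix}\cup X$ from subsets $X\subseteq\defrel^?$ to completions is a bijection. It is injective because $\defrel^{\fix}\cap\defrel^?=\varnothing$, and surjective by definition of completion. Therefore $|\compfun(\ndefiaf)|=2^{|\defrel^?|}$, always a power of two.

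Finally, equivalence $S\approxeq S'$ is only defined between sets of the same cardinality. It moreover asks for an isomorphism of the multi-relational structures $\ang{\args,\defrel_1,\dots,\defrel_n}$, which in particular preserves the number of distinct defeat relations. So any def-IAF $\ndefiaf$ with $\compfun(\ndefiaf)\approxeq\compfun(\mathsf{IS}_0)$ would need exactly $3$ completions. This contradicts $3\neq 2^k$ for every $k$.

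I expect no real obstacle. The only delicate point is to make sure the counting in $\mathsf{IS}_0$ really yields three \emph{distinct} abstract completions, since two preference-completions collapse. It also has to be clear that $\approxeq$ compares sets of completions rather than multisets of preference-completions. If one preferred a structural argument instead of counting, an alternative is the following. Completions of a def-IAF are closed under intersection of defeat relations, and the two one-directional completions of $\mathsf{IS}_0$ intersect in the empty relation on the two rebutting arguments, which is not a completion of $\mathsf{IS}_0$.
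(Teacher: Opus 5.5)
Your proof is correct, but your main argument differs from the paper's.

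\textbf{The paper's route.} The paper argues structurally. Each of the two defeats between $p\Rightarrow r$ and $q\Rightarrow\lnot r$ occurs in some but not all completions of $\mathsf{IS}_0$. So in an equivalent def-IAF both would have to be uncertain. Then the completion with $\defrel^\ast=\defrel^\fix$, which contains no defeats at all, would have no counterpart in $\compfun(\mathsf{IS}_0)$. This is essentially the closure-under-intersection argument you sketch as a fallback.

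\textbf{What your counting route buys.} Your invariant $|\compfun(\ndefiaf)|=2^{|\defrel^?|}$ is shorter. It needs nothing about how individual defeats correspond under the isomorphism. It only uses the fact that $\approxeq$ preserves the number of completions, which is built into the definition of $\approxeq$. The paper's argument, by contrast, tacitly transports specific defeats along the isomorphism.

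\textbf{What it costs.} Counting is a coarse necessary condition. It works here only because this witness happens to have three completions, and it does not say what def-IAFs fail to express. The structural view does. Completion sets of def-IAFs are intervals $[\defrel^\fix,\defrel^\fix\cup\defrel^?]$ of the powerset lattice, so they contain the meet of any two members. This pinpoints the disjunctive $\opor$ dependency that $\mathsf{IS}_0$ encodes, which is exactly what motivates the next comparison with dis-IAFs. Counting also cannot be reused for Proposition~\ref{prop:2-negative-def-IAFs}, whose witness $\nisaf_1$ has exactly two completions.
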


\begin{proof}
    As a witness, consider the pref-ISAF of Example \ref{example:pref-ISAF}. Reasoning towards a contradiction, suppose that there is a def-IAF with an equivalent set of completions. Note that the defeats among $p \Rightarrow r$ and $q \Rightarrow \lnot r$ must be uncertain in the abstract set of completions (because they appear in some but not in some others of $\mathsf{IS}$), but then there must be an abstract completion containing no defeats, and therefore an equivalent one in the pref-ISAF, and this is not the case.
\end{proof}

It can be shown, however, that the pref-ISAF of Example \ref{example:pref-ISAF} is equivalent to a disjunctive def-IAFs. So our next question is whether this can be generalised to all finite pref-ISAFs. The answer, again, is negative.

\begin{proposition}\label{prop:2-negative-def-IAFs}
$\text{pref-ISAFs } \not \lexpressive \text{ dis-IAFs.}$
\end{proposition}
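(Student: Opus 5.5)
As witness I take the pref-ISAF $\nisaf_1$ of Example~\ref{ex:isaf2}. Its two preference-completions give two abstract completions over the same three arguments $\lnot p$, $p$, $p\Rightarrow q$. The first has the defeats $\ang{\lnot p,p}$, $\ang{p,\lnot p}$, $\ang{\lnot p,p\Rightarrow q}$, so three defeats. The second has only $\ang{p,\lnot p}$, so one defeat.

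The key point is that these two defeats, $\ang{\lnot p,p}$ and $\ang{\lnot p,p\Rightarrow q}$, are perfectly correlated: both come and go together. A dis-IAF can only make uncertain defeats vary independently, or in symmetric pairs under an $\opor$ constraint. I will make this precise by counting completions and edges.

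First I analyse the shape of $\compfun(\ndepdefiaf)$ for an arbitrary dis-IAF $\ndepdefiaf=\ang{\args,\defrel^{\fix},\defrel^?,\Delta}$.
\begin{itemize}
\item Every constraint in $\Delta$ has the form $\opor(\{\ang{x,y},\ang{y,x}\})$ with both defeats in $\defrel^?$.
\item The constrained pairs are pairwise disjoint, since each unordered pair $\{x,y\}$ determines its two defeats. Repeated constraints are harmless.
\item So $\defrel^?$ splits into $m$ constrained symmetric pairs and $k$ unconstrained uncertain defeats.
\item Each constrained pair independently admits exactly $3$ choices, and each free defeat admits $2$.
\end{itemize}
Hence $|\compfun(\ndepdefiaf)|=2^k3^m$. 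If this number is $2$, then $m=0$ and $k=1$. In that case the two completions are $\ang{\args,\defrel^{\fix}}$ and $\ang{\args,\defrel^{\fix}\cup\{d\}}$ for a single defeat $d$, so their edge counts differ by exactly one.

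Next, suppose towards a contradiction that some dis-IAF $\ndepdefiaf$ satisfies $\compfun(\ndepdefiaf)\approxeq\compfun(\nisaf_1)$. Equivalence requires equal cardinality, so $\ndepdefiaf$ has exactly two completions, and by the analysis above their defeat sets have sizes $e$ and $e+1$. An isomorphism between $\ang{\args,\defrel_1,\defrel_2}$ and $\ang{\args',\defrel'_1,\defrel'_2}$ maps each defeat relation bijectively onto its counterpart. It therefore preserves the multiset of defeat-set cardinalities $\{|\defrel_1|,|\defrel_2|\}$. This gives $\{e,e+1\}=\{1,3\}$, which is impossible.

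The only delicate step is the counting argument: I need to make sure constrained pairs really are disjoint and independent, and that an $\opor$ on a symmetric pair yields exactly three options and never collapses. This follows directly from the definition, since both members of the pair lie in $\defrel^?$. I would also double-check the defeats of $\nisaf_1$, in particular that $\lnot p$ defeats $p\Rightarrow q$ exactly when it defeats $p$, which holds because the relevant subargument is $p$ itself. Once these checks are done the argument is routine.
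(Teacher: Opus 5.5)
Your proof is correct and uses the same witness $\nisaf_1$ as the paper, which only asserts that the non-correspondence is easy to show; your count $|\compfun(\ndepdefiaf)|=2^k3^m$, combined with the edge counts $1$ versus $3$, supplies that omitted verification cleanly. One small correction: the definition of dis-IAF does not exclude $x=y$, and then $\opor(\{\ang{x,x}\})$ is a singleton constraint that forces the loop (one option, not three), so ``never collapses'' is not quite right; such loops simply behave as fixed defeats, and neither the count nor your conclusion changes.
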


\begin{proof}
    
As a witness, consider the set of completions of the ISAF of Example \ref{ex:isaf2}.
It is easy to show that these AFs do not correspond to the set of completions of any dis-IAF.

\end{proof}

%Although the pref-ISAF used in the proof of the previous result corresponds, in an abstract perspective, to an imp-IAF, this is not the case in general.

Note that the pref-ISAF used in the previous proof encodes an implicative defeat dependency $\opimp(\{\ang{\lnot p,p}\},\{\ang{\lnot p, p\Rightarrow q}\})$. However, it also encodes a disjunctive dependency, namely $\opor(\{\ang{p,\lnot p},\ang{\lnot p, p}\})$. Hence, we can show that implicative dependencies alone are not enough to capture pref-ISAFs. This contrasts with recent work on the expressivity of uncertain rules and premises, where they are shown to be captured by implicative argument-incomplete AFs \cite{proietti2025comparative}.

\begin{proposition}\label{pref-ISAF-not-lespressive-imp-iafs}
    $\text{pref-ISAFs } \not \lexpressive \text{ imp-IAFs}$
\end{proposition}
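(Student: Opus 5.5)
As a witness I will reuse the pref-ISAF $\mathsf{IS}_0$ of Example~\ref{example:pref-ISAF} rather than $\mathsf{IS}_1$. Its three completions share the argument set $\{p,\ q,\ p\Rightarrow r,\ q\Rightarrow\lnot r\}$. Writing $a=\ang{p\Rightarrow r,q\Rightarrow \lnot r}$ and $b=\ang{q\Rightarrow \lnot r,p\Rightarrow r}$, their defeat relations are $\{a,b\}$, $\{a\}$ and $\{b\}$. The plan is to find a structural closure property that every set of completions of an imp-IAF enjoys, and to check that $\compfun(\mathsf{IS}_0)$ lacks it. This mirrors the proof of Proposition~\ref{prop:negative-def-IAFs}.

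The key lemma is that for any imp-IAF $\ang{\args,\defrel^{\fix},\defrel^?,\Delta}$, the set of its completions is closed under intersection of defeat relations. That is, if $\ang{\args,\defrel_1}$ and $\ang{\args,\defrel_2}$ are completions, so is $\ang{\args,\defrel_1\cap\defrel_2}$. The proof checks both clauses of the definition of completion.
\begin{itemize}
\item The first clause holds because $\defrel^{\fix}\subseteq\defrel_1\cap\defrel_2\subseteq \defrel^{\fix}\cup\defrel^?$.
\item For the second clause, every dependency has the form $\impor(\{x\},\{y\})$, i.e.\ it says $x\in\defrel^\ast$ implies $y\in\defrel^\ast$. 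If $x\in\defrel_1\cap\defrel_2$, then $y\in\defrel_1$ and $y\in\defrel_2$, since both satisfy $\Delta$. Hence $y\in\defrel_1\cap\defrel_2$.
\end{itemize}
Union-closure holds by the same reasoning, but intersection is all that is needed.

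Next I transfer this property along $\approxeq$. Suppose some imp-IAF $\ndepdefiaf$ had $\compfun(\ndepdefiaf)\approxeq\compfun(\mathsf{IS}_0)$. Then there is a bijection $f$ between argument sets and an enumeration $\defrel_1',\defrel_2',\defrel_3'$ of the completions of $\ndepdefiaf$ such that $f$ maps the defeat relations $\{a,b\},\{a\},\{b\}$ onto $\defrel_1',\defrel_2',\defrel_3'$ respectively. Since $f$ acts as a bijection on pairs, it commutes with intersection. Hence $\defrel_2'\cap\defrel_3'=f[\{a\}\cap\{b\}]=\varnothing$. By the lemma, $\ang{\args',\varnothing}$ is then a completion of $\ndepdefiaf$. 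But none of $\defrel_1',\defrel_2',\defrel_3'$ is empty, since they are images of nonempty relations. This is a contradiction.

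The main, though modest, obstacle is making the isomorphism step precise. The relation $\approxeq$ is defined via isomorphism of the multi-relational structure $\ang{\args,\defrel_1,\dots,\defrel_n}$, so it fixes a correspondence between the indexed relations. I must make sure the argument does not depend on which enumeration is chosen. The contradiction above is invariant under reordering: whatever the pairing, it identifies two completions of $\ndepdefiaf$ whose intersection is empty, while no completion is empty. Finally I will note, as the paper does, that this witness is equivalent to a dis-IAF. Together with Proposition~\ref{prop:2-negative-def-IAFs}, this shows that neither disjunctive nor implicative dependencies alone suffice.
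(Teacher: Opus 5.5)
Your proof is correct and follows the paper's route. The paper uses the same witness $\mathsf{IS}_0$ and omits the details, which (as in the proof of Proposition~\ref{prop:negative-def-IAFs}) amount to showing that any candidate imp-IAF would be forced to have a completion with no defeats; your intersection-closure lemma for singleton implicative dependencies supplies exactly this step, and you handle the enumeration built into $\approxeq$ correctly.
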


\begin{proof}
    The pref-ISAF of Example \ref{example:pref-ISAF} serves again as a witness. Details are omitted for brevity.
\end{proof}

We can ask the reverse question: Are pref-ISAFs expressive enough to simulate the kind of defeat-uncertainty encoded in def-IAFs? A negative answer is provided in the next proposition. Again, this can be compared with recent work on the expressivity of uncertain rules and premises, where they are shown to be more expressive than abstract argument-incomplete AFs \cite{proietti2025comparative}. 
\begin{proposition}
    $\text{def-IAFs } \not \lexpressive \text{ pref-ISAFs}$.
\end{proposition}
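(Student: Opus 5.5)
To prove the proposition it suffices to exhibit one def-IAF whose set of completions is not equivalent to $\compfun(\nisaf)$ for any pref-ISAF $\nisaf$. The idea is to use a very small witness, where the structure of ASPIC$^+$ arguments together with the strictness of $<$ rules out any uncertainty. I would take the one-argument def-IAF
\[
\ndefiaf_0=\ang{\{a\},\varnothing,\{\ang{a,a}\}},
\]
whose completions are exactly $\ang{\{a\},\varnothing}$ and $\ang{\{a\},\{\ang{a,a}\}}$. These are two distinct AFs over the same single argument.

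Suppose, towards a contradiction, that some pref-ISAF $\nisaf$ satisfies $\compfun(\nisaf)\approxeq\compfun(\ndefiaf_0)$.

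\textbf{Step 1: $\nisaf$ has exactly one argument.} By the definition of $\approxeq$, the common argument set of the completions of $\nisaf$ (common by Proposition \ref{prop:pref-ISAFs-def-incompleteness}) must have exactly one element, say $A$. Since every subargument of an argument is itself an argument, $\sub(A)=\{A\}$. So any attack of $A$ on $A$ is necessarily an attack on $B'=A$.

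\textbf{Step 2: whether $A$ defeats $A$ does not depend on the preference-completion.} I would show that, for every preference-completion with ordering $\leq^\ast$, $\ang{A,A}\in\defrel(\leq^\ast)$ iff $A$ attacks $A$. There are two cases.
\begin{itemize}
\item If $\pia(A,A,A)$ holds, the defeat holds regardless of preferences.
\item If $\pda(A,A,A)$ holds, the defeat requires $A\not<^\ast A$. This is always true, because $<^\ast=\leq^\ast\setminus(\leq^\ast)^{-1}$ is irreflexive.
\end{itemize}
If $A$ does not attack itself, there is no defeat in any completion. Hence $\defrel(\leq^\ast)$ is the same for all $\leq^\ast$, and therefore $\compfun(\nisaf)$ is a singleton.

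\textbf{Step 3: conclusion.} A singleton set of completions cannot be equivalent to the two-element set $\compfun(\ndefiaf_0)$, since equivalence requires equal cardinality. This gives the contradiction.

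The only point needing care is Step 1: the claim that a single-argument framework has no proper subarguments. This relies on the inductive definition of arguments, under which every element of $\sub(A)$ belongs to $\args(\at)$. The remaining steps are routine unfolding of the definitions of defeat and of $<$.
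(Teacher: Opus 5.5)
Your proof is correct and follows the paper's route. You use the same one-argument def-IAF with an uncertain self-defeat, and the same mechanism: the unique argument can only attack itself, so no preference-completion can block that attack. Your version is tidier, because invoking $\sub(A)=\{A\}$ together with the irreflexivity of $<^\ast$ replaces the paper's case analysis on the shape of the argument (ordinary premise versus $\Rightarrow\varphi$, with details left to the reader) and yields directly that $\compfun(\nisaf)$ is a singleton.
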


\begin{proof} %\todo[inline]{revise}
    Take $\ndefiaf=\ang{\args,\defrel^\fix,\defrel^?}$ where $\args=\{a\}$, $\defrel^\fix=\varnothing$ and $\defrel^?=\{\ang{a,a}\}$. We have two completions, namely, $\naf_1=\ang{\{a\}, \varnothing}$ and $\naf_2=\ang{\{a\}, \{\ang{a,a}\}}$. Reasoning toward contradiction, suppose there is a pref-ISAF $\mathsf{IS}=\unravelsaf$ with an equivalent set of completions. Let $i: \args \to \args(\unravelsaf) $ be an isomorphism that witnesses this equivalence. Since we only have one argument, then either $i(a)$ is the only element of $\kb_p$, call it $\varphi$, or $i(a)$ is of the form $\Rightarrow \varphi$ (a premise-less argument with a defeasible rule). Otherwise, we would have more than one argument or an undefeatable one, and $\naf_1$ could not be generated (the reader is invited to check these details). We can also deduce that $\varphi=-\varphi$ ($\varphi$ is a contradictory formula). But then, the value of $\leq$ does not matter, since in both cases we have that $i(a)$ defeats $i(a)$; and hence no structured completion is equivalent to $\ang{\{a\}, \varnothing}$. \par 

\end{proof}

The following corollaries follow from the previous proposition, the definition of $\lexpressive$ and the fact that every def-IAF can be seen as a (subtype of) dep-IAFs with an empty set of dependencies.
\begin{corollary}
      $\text{dis-IAFs } \not \lexpressive \text{ pref-ISAFs}$.
\end{corollary}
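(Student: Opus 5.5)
The corollary follows by lifting the counterexample of the previous proposition along the inclusion of def-IAFs into dis-IAFs. I will unfold the definition of $\lexpressive$ and argue by contradiction, reusing the witness from that proposition rather than building a new one.

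First, I observe that every def-IAF $\ang{\args,\defrel^\fix,\defrel^?}$ can be identified with the dep-IAF $\ang{\args,\defrel^\fix,\defrel^?,\varnothing}$. This tuple is trivially a dis-IAF, since the condition that $\Delta$ contain only constraints of the form $\opor(\{\ang{x,y},\ang{y,x}\})$ holds vacuously for $\Delta=\varnothing$. The two objects have literally the same completions, because satisfying $\varnothing$ imposes no requirement. Hence def-IAFs $\lexpressive$ dis-IAFs, witnessed by the identity, and the identical completion sets are in particular equivalent under $\approxeq$.

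Second, I suppose toward contradiction that dis-IAFs $\lexpressive$ pref-ISAFs. The Fact states that $\lexpressive$ is transitive, so composing with the first step gives def-IAFs $\lexpressive$ pref-ISAFs. This contradicts the previous proposition.

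Alternatively, I can argue directly. Take the def-IAF with the single argument $a$ and $\defrel^?=\{\ang{a,a}\}$ from the previous proof, viewed as a dis-IAF with $\Delta=\varnothing$. Any pref-ISAF equivalent to it would also be equivalent to the def-IAF, which the previous proposition rules out.

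There is no real obstacle here. The only point to check is that transitivity of $\lexpressive$ really composes. This holds because $\approxeq$ is transitive: composing two isomorphisms of the tuples $\ang{\args,\defrel_1,\dots,\defrel_n}$ yields an isomorphism. That is exactly what the Fact already grants.
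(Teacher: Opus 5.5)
Your proposal is correct and follows essentially the paper's route. The paper derives the corollary from the previous proposition, the definition of $\lexpressive$, and the observation that a def-IAF is a dep-IAF with $\Delta=\varnothing$ (hence vacuously a dis-IAF), which is exactly your direct argument; your transitivity version repackages the same reasoning via the Fact.
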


\begin{corollary}
    $\text{dis-imp-IAFs } \not \lexpressive \text{ pref-ISAFs}$.
\end{corollary}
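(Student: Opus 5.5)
The statement to prove is the last corollary, $\text{dis-imp-IAFs} \not\lexpressive \text{pref-ISAFs}$. By the definition of $\lexpressive$, it suffices to exhibit a single dis-imp-IAF whose set of completions is not equivalent to $\compfun(\mathsf{IS})$ for any pref-ISAF $\mathsf{IS}$.

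The plan is to reuse the witness from the preceding proposition. Let $\ndefiaf=\ang{\{a\},\varnothing,\{\ang{a,a}\}}$. First, observe that $\ndefiaf$, taken together with the empty set of dependencies, is a dis-imp-IAF. The classes of dis-IAFs, imp-IAFs and dis-imp-IAFs only restrict the \emph{form} of the dependencies in $\Delta$, and these restrictions are vacuously satisfied when $\Delta=\varnothing$. Second, note that the completions are unchanged by this embedding. With $\Delta=\varnothing$, the second clause in the definition of completion of a dep-IAF is trivially met, so
\[
\compfun(\ang{\{a\},\varnothing,\{\ang{a,a}\},\varnothing})=\compfun(\ndefiaf)=\{\ang{\{a\},\varnothing},\ang{\{a\},\{\ang{a,a}\}}\}.
\]

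Now suppose, towards a contradiction, that $\text{dis-imp-IAFs}\lexpressive\text{pref-ISAFs}$. Then some pref-ISAF $\mathsf{IS}$ satisfies $\compfun(\mathsf{IS})\approxeq\compfun(\ang{\{a\},\varnothing,\{\ang{a,a}\},\varnothing})=\compfun(\ndefiaf)$. This contradicts the preceding proposition, whose proof shows that no pref-ISAF has a set of completions equivalent to $\compfun(\ndefiaf)$. Equivalently, one can argue abstractly: every def-IAF is a dis-imp-IAF with the same completions, so $\text{def-IAFs}\lexpressive\text{dis-imp-IAFs}$. If additionally $\text{dis-imp-IAFs}\lexpressive\text{pref-ISAFs}$ held, transitivity of $\lexpressive$ (the earlier Fact) would give $\text{def-IAFs}\lexpressive\text{pref-ISAFs}$, which the preceding proposition refutes.

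The only point needing care is the embedding step, namely checking that the sub-class definitions allow $\Delta=\varnothing$. They do, since each is phrased as ``$\Delta$ only contains constraints of the form \dots''. The same argument yields the companion corollary for dis-IAFs verbatim.
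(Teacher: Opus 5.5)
Your proof is correct and matches the paper's route. Like the paper, you embed the def-IAF witness $\ang{\{a\},\varnothing,\{\ang{a,a}\}}$ as a dis-imp-IAF with $\Delta=\varnothing$, which leaves the completions unchanged, and then invoke the preceding proposition (def-IAFs $\not\lexpressive$ pref-ISAFs), either directly or via transitivity of $\lexpressive$.
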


\begin{corollary}
    dep-IAFs  $\not \lexpressive$ pref-ISAFs.
\end{corollary}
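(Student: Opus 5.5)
This corollary follows directly from the preceding proposition (def-IAFs $\not\lexpressive$ pref-ISAFs), so no new construction is needed. The plan is to show that every def-IAF is, up to equivalence, a dep-IAF. Then, if some dep-IAF could simulate every pref-ISAF, so could a def-IAF, contradicting that proposition.

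\emph{Step 1: embedding.} For any def-IAF $\ndefiaf=\ang{\args,\defrel^{\fix},\defrel^?}$, consider $\ndepdefiaf=\ang{\args,\defrel^{\fix},\defrel^?,\varnothing}$. Since $\Delta=\varnothing$, the satisfaction condition in the definition of dep-IAF completions holds vacuously. Hence $\compfun(\ndepdefiaf)=\compfun(\ndefiaf)$, and by reflexivity (identity isomorphism) $\compfun(\ndepdefiaf)\approxeq\compfun(\ndefiaf)$. This gives def-IAFs $\lexpressive$ dep-IAFs.

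\emph{Step 2: contraposition via transitivity.} Suppose, towards a contradiction, that dep-IAFs $\lexpressive$ pref-ISAFs. Since $\lexpressive$ is transitive (the Fact above), def-IAFs $\lexpressive$ dep-IAFs $\lexpressive$ pref-ISAFs yields def-IAFs $\lexpressive$ pref-ISAFs, contradicting the previous proposition.

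Concretely, one can also use the witness directly. The dep-IAF $\ang{\{a\},\varnothing,\{\ang{a,a}\},\varnothing}$ has exactly the completions $\ang{\{a\},\varnothing}$ and $\ang{\{a\},\{\ang{a,a}\}}$. The argument given in the previous proof shows that no pref-ISAF generates an equivalent set.

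The only point needing care is the transitivity step, and it is routine: composing the two isomorphisms witnessing $\approxeq$ gives the required isomorphism. I expect no substantive obstacle, since all the work was done in the preceding proposition.
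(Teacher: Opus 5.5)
Your proof is correct and takes essentially the paper's approach. The paper derives the corollary from the preceding proposition (def-IAFs $\not\lexpressive$ pref-ISAFs), the definition of $\lexpressive$, and the observation that every def-IAF is a dep-IAF with $\Delta=\varnothing$; your embedding-plus-transitivity argument and your direct witness $\ang{\{a\},\varnothing,\{\ang{a,a}\},\varnothing}$ are just this reasoning written out explicitly.
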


%=========================================
\subsection{Towards a non-trivial positive expressivity threshold}
%=========================================
Finally, we take the first step towards providing a non-trivial, upper threshold for the expressivity of pref-ISAFs. Let us provide some definitions.

\begin{definition}[Candidate relations and dependencies]
    Let $\nisaf=\unravelsaf$ be a pref-ISAF with $\leq=\leq^\fix\cup \leq^?$. Define the following relations and sets of dependencies over $\args(\nisaf)$:

    \begin{itemize}
    \item $\leq^N= (\args(\nisaf)\times \args(\nisaf)) \setminus \leq$.
    
        \item $\defrel^\fix_\nisaf=\{\ang{X,Y}\mid\exists Y' \pia(X,Y,Y')\lor \pda(X,Y,X)\lor  $\\ \indent  $\qquad \qquad  \qquad \exists Y'(\pda(X,Y,Y')\land (X\leq^NY'\lor Y'\leq^\fix X))\}$.
        
        \item $\defrel^?_\nisaf=\{\ang{X,Y}\mid\ang{X,Y}\notin \defrel^\fix_\nisaf \land \exists Y'\big( \pda(X,Y,Y')\land (X\leq^?Y'\lor Y'\leq^? X)\big)\}$.
\item    {\small $\Delta^\opor_\nisaf=\{\opor(\{\ang{X,Y},\ang{Y,X}\})\mid \pda(X,Y,Y)\land\pda(Y,X,X)\land X\leq^? Y\land Y \leq^? X \}$.}
\item $\Delta^\opimp_\nisaf=\{\opimp(\{\ang{X,Y}\},\{\ang{X,Y'}\})\mid Y \in \sub(Y')\}$.
    \end{itemize}
\end{definition}

\begin{proposition}\label{prop:main}
Let $\nisaf=\unravelsaf$ be a pref-ISAF. Then for all $X,Y,W \in \args(\nisaf)$, we have that:

\begin{enumerate}
\item $\ang{X,Y}\in \defrel^\fix$ iff $\forall \comp \in \compfun(\nisaf)$, $\ang{X,Y}\in \defrel^\ast$.

    \item $(\exists \comp\in \compfun(\nisaf)$ such that $\ang{X,Y}\in \defrel^\ast$) iff $\big(\ang{X,Y}\in \defrel^\fix$ OR $\ang{X,Y}\in \defrel^?_\nisaf\big )$.
    
    \item $\opor({\ang{X,Y},\ang{Y,X}})\in \Delta^\opor_\nisaf$ implies $\forall \comp \in \compfun(\nisaf)$ either $\ang{X,Y}\in \defrel^\ast$ or $\ang{Y,X}\in \defrel^\ast$.
    \item $\opimp(\ang{X,Y},\ang{X,W})\in \Delta^\opimp_\nisaf$ implies that $\forall \comp \in \compfun(\nisaf)$, if $\ang{X,Y}\in \defrel^\ast$, then $\ang{X,W}\in \defrel^\ast$.
\end{enumerate}
\end{proposition}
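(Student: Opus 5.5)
The plan is to reduce everything to a pointwise description of defeat in a single preference-completion. Fix $\nsaf^\ast$ with $\leq^F\subseteq\leq^\ast\subseteq\leq$. By definition, $\ang{X,Y}\in\defrel(\leq^\ast)$ iff there is some $Y'$ with either $\pia(X,Y,Y')$, or $\pda(X,Y,Y')$ and $X\not<^\ast Y'$. Here $X\not<^\ast Y'$ means $\ang{X,Y'}\notin\leq^\ast$ or $\ang{Y',X}\in\leq^\ast$. Since $\leq^N$, $\leq^F$ and $\leq^?$ partition $\args(\nisaf)\times\args(\nisaf)$, I will classify each pair by which of the three it lies in. Two basic observations drive the argument:
\begin{itemize}
\item $X\not<^\ast X$ always holds.
\item $X\not<^\ast Y'$ holds in \emph{every} completion iff $X=Y'$, or $X\leq^N Y'$, or $Y'\leq^F X$.
\end{itemize}
These are exactly the disjuncts in the definition of $\defrel^F_\nisaf$, which I read as the $\defrel^F$ of the statement.

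For item 1, the left-to-right direction is immediate from the observations. For the converse I will argue by contraposition. Suppose $\ang{X,Y}\notin\defrel^F_\nisaf$. Then there is no preference-independent witness. Moreover, every $Y'$ with $\pda(X,Y,Y')$ satisfies three conditions: $Y'\neq X$, $\ang{X,Y'}\in\leq^F\cup\leq^?$, and $\ang{Y',X}\notin\leq^F$. I then build a single completion
$$\leq^\ast=\leq^F\cup\{\ang{X,Y'}\in\leq^?\mid \pda(X,Y,Y')\}.$$
This adds only pairs with first component $X$ and second component different from $X$, so it never adds any $\ang{Y',X}$ with $Y'\neq X$. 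Hence $X<^\ast Y'$ for every witness $Y'$, and $\ang{X,Y}\notin\defrel(\leq^\ast)$. I expect this simultaneous choice to be the main obstacle. One must check that satisfying one witness never forces violating another. This holds because the added pairs and the forbidden pairs have different orientations, and the case $Y'=X$ has already been excluded.

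For item 2, the right-to-left direction splits into cases:
\begin{itemize}
\item If $\ang{X,Y}\in\defrel^F_\nisaf$, use item 1.
\item If the witness $Y'$ has $X\leq^? Y'$, take $\leq^\ast=\leq^F$. Then $\ang{X,Y'}\notin\leq^\ast$, so $X\not<^\ast Y'$.
\item If $Y'\leq^? X$, take $\leq^\ast=\leq^F\cup\{\ang{Y',X}\}$, so again $X\not<^\ast Y'$.
\end{itemize}
For left-to-right, take a defeating completion and its witness $Y'$. A $\pia$ witness, or the case $Y'=X$, lands in $\defrel^F_\nisaf$. Otherwise $X\not<^\ast Y'$ gives $\ang{X,Y'}\notin\leq^\ast$ or $\ang{Y',X}\in\leq^\ast$, and by the partition the pair lies in $\leq^N$, $\leq^F$ or $\leq^?$. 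Each case falls into $\defrel^F_\nisaf$ or $\defrel^?_\nisaf$.

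For item 3, note that $<^\ast$ is asymmetric by construction, so $X<^\ast Y$ and $Y<^\ast X$ cannot both hold. Using the witnesses $\pda(X,Y,Y)$ and $\pda(Y,X,X)$, at least one of $\ang{X,Y}$ and $\ang{Y,X}$ is therefore a defeat in every completion.

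For item 4, I first prove by induction on argument construction two inheritance facts: $\sub$ is transitive, and $Y\in\sub(W)$ implies $\prem(Y)\subseteq\prem(W)$. Then, if $X$ defeats $Y$ on $Y'$, we have $Y'\in\sub(W)$, and the same attack type (undermining on a premise, rebutting, or undercutting on $Y'$) is an attack on $W$ on $Y'$. Because the preference condition refers only to $X$ and $Y'$, it is unchanged, so $X$ also defeats $W$ in the same completion.
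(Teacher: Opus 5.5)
Your proposal is correct and takes essentially the same route as the paper: item 1 uses the same completion ($\leq^F$ plus the pairs $\ang{X,Y'}$ for preference-dependent witnesses $Y'$), item 2 uses the same two completions $\leq^F$ and $\leq^F\cup\{\ang{Y',X}\}$, and item 3 uses the asymmetry of $<^\ast$. You also fill in details the paper omits, namely the orientation argument showing that the added pairs never create a reverse pair $\ang{Y',X}$, and the inductive inheritance argument for item 4, which the paper only calls well known.
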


\begin{proof} (\textbf{1.}) ($\Longrightarrow$) Suppose $\ang{X,Y}\in\defrel_\nisaf^\fix$. Let $\ang{X,Y}\in \defrel^\ast$ and $\defrel^\ast=\defrel(\leq^\ast)$ ($\leq^\ast$ is the preference relation of the preference-completion that generates $\defrel^*)$. We have three cases (on the membership to $\defrel^\fix_\nisaf$).\par  
   \noindent (\textbf{Case}: $\exists Y'(\pia(X,Y,Y'))$) Then it does not matter the preference relation, we will have $\ang{X,Y}\in \defrel(\leq^\ast)$. \par 
      \noindent (\textbf{Case}: $\pda(X,Y,X)$) Suppose, reasoning towards contradiction, that $\ang{X,Y}\notin\defrel(\leq^\ast)$, but then by the case hypothesis and the definition of defeat, we have $X\leq^\ast X$ and $X\nleq^\ast X$, which is absurd. \par 
\noindent (\textbf{Case}: $\exists Y'(\pda(X,Y,Y')\land (X\leq^N Y'\lor Y'\leq^\fix X))$) Let $Y_1$ be such an $Y'$. We have two cases, if $X\leq^NY_1$, then $X\nleq^\ast Y$ (and hence $\ang{X,Y}\in \defrel(\leq^\ast)$, because $\pda(X,Y,Y_1))$. If, on the other hand, $Y_1\leq^\fix X$, then $Y_1\leq^\ast X$ and we arrive to  $\ang{X,Y}\in \defrel(\leq^\ast)$ again.
\par 

(\textbf{1.}) ($\Longleftarrow$)
Suppose that:
\\
\noindent (1) $\forall \comp \in \compfun(\nisaf)$, $\ang{X,Y}\in \defrel^\ast$, \\
\noindent (2) $\forall Y'(\lnot \pia(X,Y,Y'))$, \\
\noindent (3) $\lnot \pda(X,Y,X)$, \\ and, reasoning towards contradiction, suppose that: \\
 \noindent (4) $\forall Y'(\pda(X,Y,Y')\to (X \nleq^N Y'\land Y'\nleq^\fix X)$.\\
 Moreover, define $\leq_1=\leq^\fix\cup\{\ang{X,Y'}\mid \pda(X,Y,Y')\}$. We will show that $\ang{\args,\defrel(\leq_1)}\in \compfun(\nisaf)$ and $\ang{X,Y}\notin \defrel(\leq_1)$ (which contradicts (1)). For the former claim, note that $\leq^\fix\subseteq \leq_1\subseteq \leq^\fix\cup\leq^?$ follows from the definition of $\leq_1$ ((4) is needed to deduce this, details are omitted for brevity). For the latter claim, by (2), we know that if $\ang{X,Y}\in \defrel(\leq^\ast)$, then there must be an argument $Z$ such that $\pda(X,Y,Z)$ and $X\not<_1Z$. But this can be shown to be absurd. For suppose there is such a $Z$, then it follows from the definition of $\leq_1$ that $X\leq_1 Z$. Then it must be the case that $Z \leq_1 X$ and, from (4) and $\pda(X,Y,Z)$, we can derive $Z\leq^?X$, which by definition of $\leq_1$ leads to $\pda(X,Y,X)$, which contradicts (3).\par 
\par 
    (\textbf{2.}) ($\Longrightarrow$) Suppose $\ang{X,Y}\in \defrel^\ast$ for some $\defrel^\ast\in \compfun(\nisaf)$. Let $\defrel^\ast=\defrel(\leq^\ast)$. This implies by definition of defeat that $\exists Y'(\pia(X,Y,Y'))\lor\exists Y'(\pda(X,Y,Y')\land X \not <^\ast Y') $. We continue by cases.

    \noindent (\textbf{Case}: $\exists Y'(\pia(X,Y,Y'))$) Then $\ang{X,Y}\in \defrel^\fix_\nisaf$ by definition and we are done. 

   \noindent (\textbf{Case}: $\exists Y'\pda(X,Y,Y')\land X \not <^\ast Y')$) Let $Z=Y'$, we get $\pda(X,Y,Z)$ and ($X\not \leq^\ast Z$ or $Z \leq^\ast X$). The second claim is equivalent to a four-element disjunction: either $X\leq^NZ$ or $X\leq^?Z$ or $Z\leq^F X$ or $Z\leq^? X$ (by definition of preference-completions). Now it can be shown, on the one hand, that $X\leq^N Z$ or $X \leq^F Z$ leads to $\ang{X,Y}\in \defrel^\fix_\nisaf$. On the other hand, $X\leq^? Z$ or $Z\leq^? X$ leads to $\ang{X,Y}\in \defrel^\fix_\nisaf$ or $\ang{X,Y}\in \defrel^?_\nisaf$ and, retrieving the existential quantifier, we are done. Details are omitted for brevity.

   \par 
   (\textbf{2.}) ($\Longleftarrow$) Suppose that $\ang{X,Y}\in \defrel^\fix_\nisaf$ or $\ang{X,Y}\in \defrel^?_\nisaf$. \par 

   \noindent (\textbf{Case}: $\ang{X,Y}\in \defrel^\fix_\nisaf$) It follows from item 1 of this proposition. \par 

   \noindent(\textbf{Case}: $\ang{X,Y}\in \defrel^?_\nisaf$). Then $\exists Y'(\pda(X,Y,Y')\land (Y'\leq^? X \lor X \leq^?Y'))$. Let $Y_1$ be such an $Y'$, then either $Y_1\leq^? X$ or $X\leq^?Y_1$. For the first case ($Y_1\leq^? X$), define $\leq_1=\leq^\fix\cup\{(Y_1,X)\}$. It is then easy to show that $\ang{\args,\defrel(\leq_1)}\in \compfun(\nisaf)$, and that $\ang{X,Y}\in \defrel(\leq_1)$. For the second case ($X\leq^?Y_1$), we can arrive that $\ang{X,Y}\in \defrel(\leq^\fix)$.

   \par 
\par 

 (\textbf{3.}) Let $\opor(\{\ang{X,Y},\ang{Y,X}\})\in \Delta^\opor_\nisaf$, which implies $\pda(X,Y,Y)$, $\pda(Y,X,X)$, $X\leq^? Y$ and $Y\leq^?X$. Let $\comp \in \compfun(\nisaf)$, with $\defrel^\ast=\defrel(\leq^\ast)$. Suppose that $\ang{X,Y}\notin \defrel(\leq^\ast)$. The latter implies together with $\pda(X,Y,Y)$ and the definition of defeat that $X<^\ast Y$, which together with $\pda(Y,X,X)$ implies $\ang{Y,X}\in \defrel(\leq^\ast)$.

   (\textbf{4.}) This is a well known fact of ASPIC$^+$ (it follows directly from the definition of defeat).
\end{proof}

\paragraph{Discussion} %If we could strengthen the previous Proposition to get the right-to-left directions of items 1, 3 and 4, we would get a positive, non-trivial threshold for the abstract expressivity of pref-ISAFs. So far, the problem with point 1 is the phenomenon of polymorphic attacks in ASPIC$^+$\cite{baroni2025polymorphic}: the fact that there can be several (preference-dependent) attacks from $X$ to $Y$, which creates a puzzle when one is to construct the right preference completion to show that the attack does not succeed in every preference-completion. 
%However, we are confident that the proof can be found. 
If we could strengthen points 3 and 4 of the previous proposition to show that any other possible dependency that $\nisaf$ satisfies is already implied by $\Delta^\opor_\nisaf$ and $\Delta^\opimp_\nisaf$, then a non-trivial abstract expressivity threshold for pref-ISAFs would have been reached. We have not arrived there yet. Nonetheless, after examining a substantial number of potential counterexamples without success, we formulate the following conjecture, which delineates our main venue for future work.

\begin{conjecture}\label{conjecture:pref-ISAFs}
pref-ISAFs $\lexpressive$ dis-imp-IAFs.
\end{conjecture}

%\missingfigure[]{expressivity results}

%========================
\section{Conclusion}
%========================

\paragraph{Recap.} The following figure sums up all the results of the paper where a directed arrow from $\mathsf{X}$ to $\mathsf{Y}$ represents that $\mathsf{Y}$ is strictly more expressive than $\mathsf{X}$. Transitive arrows have been omitted for readability. Moreover, our main conjecture is represented through a dashed arrow.

\begin{center}
    
\includegraphics[scale=1]{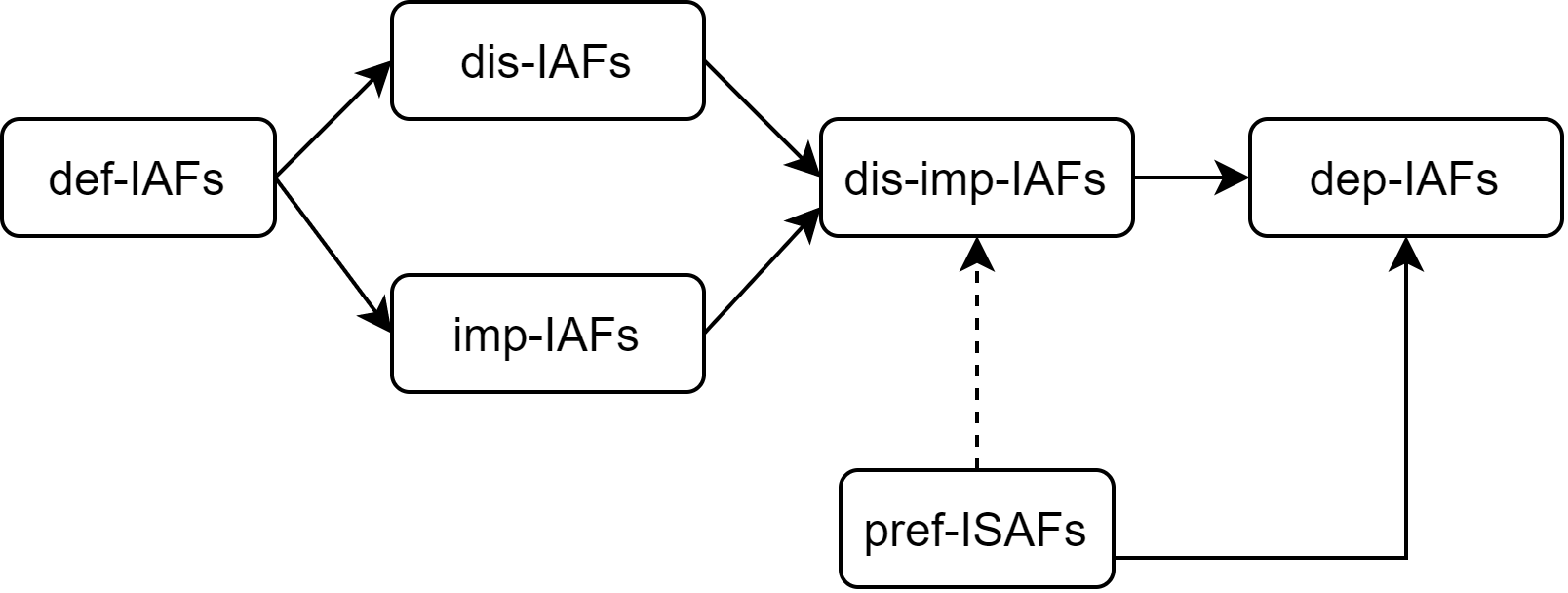}

\end{center}

\paragraph{Future work.} We close this paper by outlining three clear open paths for future work. First, and foremost, we aim at proving Conjecture \ref{conjecture:pref-ISAFs}. This, besides informing us with what kind of expressivity is needed for modelling uncertain preferences at the abstract level, would also give us some hints about computational complexity, since dis-imp-IAFs are a well-studied fragment of dep-IAFs (see \cite[Table 1]{fazzingakr21}). Second, as mentioned in the Background section, preferences among arguments in ASPIC$^+$ are sometimes rooted on more basic relations among defeasible rules and/or ordinary premises, and these are in turn assumed to have some properties. Hence, a natural way to continue our work is to study uncertainty at this more concrete level, and to compare it with the current results on uncertain preferences at the level of arguments. Finally, we could compare the expressivity of pre-ISAFs with other forms of uncertainty that generate uncertain defeats (namely, uncertain contrary functions and uncertain naming conventions).

% \begin{itemize}
% \item Proving the conjecture (perhaps for a strict fragment \cite{odekerken2025argumentative}).
% \item Uncertainty in a more concrete level of preferences (among premises or rules).
% \item Comparison with other form of structured, defeat-uncertain formalisms.
% \end{itemize}

\bibliographystyle{plain}
\bibliography{argpi_biblio}
\end{document}